\documentclass[twoside]{article}

\usepackage[accepted]{aistats2020}

\usepackage{hyperref}       
\usepackage{url}            
\usepackage{booktabs}       

\usepackage{bm}
\usepackage{mathtools}
\bibliographystyle{abbrvnat}
\usepackage{amsthm}
\usepackage{array}
\DeclareMathOperator*{\argmax}{arg\,max}

\usepackage[english]{babel}
\newtheorem{theorem}{Theorem}
\newtheorem{coro}{Corollary}
\newtheorem{defn}{Definition}
\usepackage[ruled,vlined]{algorithm2e}
\usepackage{todonotes}
\usepackage{amsmath}
\usepackage{mathrsfs}
\newcommand{\norm}[1]{\left\lVert#1\right\rVert}
\usepackage{amssymb}
\usepackage{soul}


\setlength{\pdfpageheight}{11in}
\setlength{\pdfpagewidth}{8.5in}

\usepackage[round]{natbib}
\renewcommand{\bibname}{References}


\hypersetup{draft}

\begin{document}

\twocolumn[

\aistatstitle{(Updated) Why Non-myopic Bayesian Optimization is Promising and How Far Should We Look-ahead? A Study via Rollout}

\aistatsauthor{Xubo Yue \And Raed Al Kontar}

\aistatsaddress{University of Michigan \And University of Michigan}

]

\begin{abstract}
Lookahead, also known as non-myopic, Bayesian optimization (BO) aims to find optimal sampling policies through solving a dynamic program (DP) that maximizes a long-term reward over a rolling horizon. Though promising, lookahead BO faces the risk of error propagation through its increased dependence on a possibly mis-specified model. In this work we focus on the rollout approximation for solving the intractable DP. We first prove the improving nature of rollout in tackling lookahead BO and provide a sufficient condition for the used heuristic to be rollout improving. We then provide both a theoretical and practical guideline to decide on the rolling horizon stagewise. This guideline is built on quantifying the negative effect of a mis-specified model. To illustrate our idea, we provide case studies on both single and multi-information source BO. Empirical results show the advantageous properties of our method over several myopic and non-myopic BO algorithms. 
\end{abstract}

\section{Introduction}
\label{sec:intro}

Bayesian optimization is a popular technique to optimize an unknown and expensive-to-evaluate objective function through sequential sampling strategies. Traditionally BO has focused on myopic (also referred to as greedy) algorithms, where sampling points are decided based on a one-step lookahead utility function, oblivious to how this design will affect the future steps of the optimization and the remaining budget. 


Recently, motivated by reinforcement learning, there have been attempts to extend greedy BO methods into multi-step lookahead algorithms that maximize a reward over a rolling horizon. Though it seems promising to look further into the future, this approach might sabotage performance due to accumulated errors and increased dependence on a possibly mis-specified model. This raises the question: is a practical implementation of non-myopic approaches indeed useful? Although we cannot give a universal answer, we can shed light on a specific class of non-myopia: rollout dynamic programming.


Rollout is a sub-optimal approximation algorithm to sequentially solve intractable dynamic programming problems. It utilizes problem-dependent heuristics to approximate the future reward using simulations
over several future steps (i.e., the rolling horizon). Indeed, rollout has been successfully applied to the non-myopic BO scenario \citep{lam2016bayesian, lam2017lookahead}. Yet, rollout still faces two challenges: theoretical justification/guarantees and error propagation as errors from a mis-specified model will accumulate as we look further into the future. These challenges raise the question whether long term planning in BO is necessary. 

In this work, we first provide theoretical justification for rollout in BO settings. Specifically we show that under the class of sequentially improving heuristics, rollout is guaranteed to outperform its myopic counterpart. We then provide a guideline to carefully choose a rolling horizon at each stage of the discounted DP. Based on these facts, we argue that a short horizon is beneficial and also computationally economical. Therefore, using non-myopia is promising and deserves further research attention.

We organize the remaining paper as follows. In Sec. \ref{sec:BO_DP}, we briefly review BO, DP and rollout. We then prove the performance guarantee of rollout in Sec. \ref{sec:improve} and give a practical guideline on choosing the rolling horizon in Sec. \ref{sec:h}. In Sec. \ref{sec:case}, we provide case studies to evidence our theoretical argument. Detailed literature review can be found in Sec. \ref{sec:literature}. Some algorithmic details and multi-information source BO are included in the supplementary material.

\section{Background}
\label{sec:BO_DP}
In this section, we provide the problem description and a brief review on the technical background needed for this paper.
\subsection{Bayesian Optimization}
\label{sec:BO}
Let $f:\bm{\mathcal{X}}\to\mathbb{R}$ be an objective function which is expensive to evaluate. We consider the optimization problem:
\begin{equation}
    \bm{x}^*=\argmax_{\bm{x}\in\bm{\mathcal{X}}} f(\bm{x}),
\end{equation}
where $\bm{x}$ is a $d$-dimensional input/design vector and $\bm{\mathcal{X}}$ is a compact set in $\mathbb{R}^d$. Given limited budget $B$, BO aims to search for the optimal $\bm{x}^*$ by iteratively updating a surrogate model of $f(\bm{x})$, where this surrogate is used to find the next design point. Typically, in BO, the surrogate model is a Gaussian process ($\mathcal{GP}$), due to its Bayesian interpretation and uncertainty quantification capability (see \cite{rasmussen2003gaussian} for more information).

Without loss of generality, suppose we can sample $N$ design points. Given the current data $\bm{D}_k$, $k\in\{1,\ldots,N\}$, BO aims to determine the next informative sampling point $\bm{x}_{k+1}$ by solving the auxiliary problem:
\begin{equation}
    \label{eq:acq} 
    \bm{x}_{k+1}=\argmax_{\bm{x}\in\bm{\mathcal{X}}} Q_k(\bm{x};\bm{D}_k),
\end{equation}
where $Q_k$ is a acquisition/utility function that only involves evaluating the surrogate and not the expensive objective function $f$. Typically, evaluation of an acquisition function is relatively cheap. The rationale is to seek design points that produce maximum increment in the objective function. After Eq. \eqref{eq:acq} is solved, we sample at location $\bm{x}_{k+1}$ and observe the output $y_{k+1}$. The iterative algorithm proceeds by augmenting the current training data $\bm{D}_k$ with a new observation to obtain $\bm{D}_{k+1}=\bm{D}_k\cup\{(\bm{x}_{k+1},y_{k+1})\}$.  Popular choices of acquisition functions are entropy search (ES) \citep{hennig2012entropy}, predictive entropy search (PES) \citep{hernandez2014predictive} and expected improvement (EI) \citep{lam2016bayesian}. All aforementioned functions exploit myopic strategies and ignore the future information. 

\subsection{Dynamic Programming}
\label{sec:DP}
Lookahead BO can be directly viewed as an instance of DP. In such settings the non-myopic acquisition function quantifies rewards over future steps. Due to limited budge or sampling capacity, we consider a finite $N$-stage DP formulation. Denote by $k\in\{1,..., N\}$. At each stage $k$, define the state space as $\mathcal{S}_k=(\bm{\mathcal{X}}\times\mathbb{R})$ and denote by dataset $\bm{D}_k\coloneqq s_k\in\mathcal{S}_k$ the current state, where $s_k$ is the state in the state space $\mathcal{S}_k$. A policy $\bm{\pi}=\{\pi_1,\ldots,\pi_N\}\in\bm{\Pi}$ is a sequence of rules (i.e., sampling actions) $\pi_k$ mapping the state space $\mathcal{S}_k$ to the design space $\bm{\mathcal{X}}$, where $\bm{\Pi}$ is a policy space. We use $\pi^{\bm{\pi}}_k$ to emphasize the $k^{th}$ rule under policy $\bm{\pi}$. Let $\pi_k(\bm{D}_k)=\bm{x}_{k+1}$. 

Now denote by $r_k:\mathcal{S}_k\times\bm{\mathcal{X}}\to\mathbb{R}$ the reward function at stage $k$. The reward function $r_k(\bm{D}_k,\bm{x}_{k+1})$ quantifies the benefits of sampling at location $\bm{x}_{k+1}$ given the current dataset $\bm{D}_k$. For example, one popular choice of the reward function is the expected improvement function \citep{frazier2018tutorial}.

As there is no sampling action at the end-stage (i.e., $k=N+1$), we define the end-stage reward as $r_{N+1}:\mathcal{S}_{N+1}\to\mathbb{R}$. As a result, the discounted expected cumulative reward of a finite $N$-step horizon under policy $\bm{\pi}$ given initial dataset $\bm{D}_1$ can be expressed as $ R^{\bm{\pi}}(\bm{D}_1)=$
\begin{equation}
\label{eq:reward}
   \mathbb{E}\bigg[\sum_{k=1}^{N} \alpha^{k-1} r_k(\bm{D}_k,\bm{x}_{k+1})+\alpha^N r_{N+1}(\bm{D}_{N+1})\bigg],
\end{equation}
where $\alpha\in[0,1]$ is the discount factor. The discount factor plays an important role in this setting, as it controls the effect of error propagation. \emph{In the greedy algorithm, we have $\alpha=0$}. In the policy space $\bm{\Pi}$, we are interested in the optimal policy $\bm{\pi}^*\in\bm{\Pi}$ which maximizes Eq. \eqref{eq:reward}. Specifically,
\begin{equation}
\label{eq:reward2}
    R^{\bm{\pi}^*}(\bm{D}_1)\coloneqq\max_{\bm{\pi}\in\bm{\Pi}}R^{\bm{\pi}}(\bm{D}_1).
\end{equation}
Based on the Bellman optimality equation, we can then formulate \eqref{eq:reward} and \eqref{eq:reward2} as a recursive DP:
\begin{equation}
\label{eq:DP}
\begin{split}
    &R_k(\bm{D}_k)=\max_{\bm{x}_{k+1}\in\bm{\mathcal{X}}}\mathbb{E}[r_k(\bm{D}_k,\bm{x}_{k+1})+\alpha R_{k+1}(\bm{D}_{k+1})],\\
    &R_{N+1}(\bm{D}_{N+1})=r_{N+1}(\bm{D}_{N+1}),
\end{split}
\end{equation}
where $R_k(\cdot)$ is known as the reward-to-go function. Without loss of generality, we set $R_{N+1}(\bm{D}_{N+1})=0$. Therefore, at stage $k$, the next sampling location is decided by maximizing $\mathbb{E}[r_k(\bm{D}_k,\bm{x}_{k+1})+\alpha R_{k+1}(\bm{D}_{k+1})]$. In the context of BO, we can naturally set the acquisition function to be $$Q_k(\bm{x};\bm{D}_k)=\mathbb{E}[r_k(\bm{D}_k,\bm{x}_{k+1})+\alpha R_{k+1}(\bm{D}_{k+1})]$$. 


\subsection{Rollout}
\label{sec:rollout}
The DP formulation in Sec. \ref{sec:DP} is subject to a huge computational burden and curse of dimensionality due to the uncountable state and action space. Furthermore, the formulation assumes that data in the last step is available and computes the acquisition function in a backward manner, which is impractical in BO. In order to solve the intractable DP, an approximate dynamic programming (ADP) approach - rollout \citep{bertsekas1995dynamic} has been proposed. Rollout has recently enjoyed success across a variety of domains as it builds on several heuristic rules $\tilde{\pi}_k, \forall k$ (details later) \citep{lam2016bayesian} and is efficient for large-scale and finite-horizon DP problems. Instead of solving DP in a backward manner, rollout solves DP in a forward manner. Here we briefly describe the rollout algorithm. We first define a key component - the rolling horizon $h (h\geq 1)$ and let $\tilde{N}=\min\{k+h,N\}$. At stage $k$, rollout first decides a sampling location $\tilde{\bm{x}}_{k+1}$ using a heuristic policy and collect a simulated $\tilde{y}_{k+1}$ from the surrogate model. For example, the heuristic policy and the simulated output $\tilde{y}_{k+1}$ can be the sampling action and the output generated from the expected improvement function. Afterward, we create a simulated dataset $\tilde{\bm{D}}_{k+1}=\bm{D}_k\cup\{\tilde{\bm{x}}_{k+1},\tilde{y}_{k+1}\}$. Based on this simulated dataset, one can use a similar aforementioned procedure to collect $\{\tilde{\bm{x}}_{k+2},\tilde{y}_{k+2}\},\ldots,\{\tilde{\bm{x}}_{k+h},\tilde{y}_{k+h}\}$. Using the simulated dataset $\tilde{\bm{D}}_{k+1},\ldots,\tilde{\bm{D}}_{k+h}$, one can further quantify rewards $r_k,\ldots,r_{k+h}$. As a result, we select the optimal sampling location $\bm{x}_{k+1}$ that maximizes the accumulated reward over a rolling horizon. Mathematically, we are optimizing the following reward-to-go functions:
\begin{equation}
\label{eq:rollout}
    \begin{split}
        H_k(\bm{D}_k)&=\mathbb{E}[r_k(\bm{D}_k,\tilde\pi_k(\bm{D}_k))+\alpha H_{k+1}(\tilde{\bm{D}}_{k+1})],\\
        H_{\tilde N}(\bm{D}_{\tilde N})&=r_{\tilde N}(\tilde{\bm{D}}_{\tilde N},\tilde\pi_{\tilde N}),
    \end{split}
\end{equation}
where $\tilde\pi_k$ is the heuristic rule at every iteration $k\in[\tilde N]=\{1,\ldots,\tilde N\}$ such that $\tilde{\pi}_k(\bm{D}_k)=\tilde{\bm{x}}_{k+1}$. For example, if one uses the EI acquisition function, the heuristic rule is `` sampling at location $\tilde{\bm{x}}_{k+1}$ that provides the maximal improvement". Here note that the heuristic rule only samples at the location that maximizes the current acquisition function and ignores the long-term reward. However, this does not indicates rollout is myopic. In fact, $\tilde{\bm{x}}_{k+1}$ is a simulated sampling location that will be used to create simulated datasets $\tilde{\bm{D}}_{k+1},\ldots,\tilde{\bm{D}}_{k+h}$. The final decision on the sampling location $\bm{x}_{k+1}$ (without the tilde notation) is selected to maximize the accumulated reward over a rolling horizon. In essence, this feature makes rollout a non-myopic algorithm.  

Eq. \ref{eq:DP} and Eq. \ref{eq:rollout} have a key difference: the former one has a maximization operator. In Eq. \ref{eq:DP}, to compute the optimal sampling location at stage $k$, one needs to know the optimal $R_{k+1}$ at stage $k+1$. This is apparently infeasible as we do not have any information about $R_{k+1},\bm{D}_{k+1}$ at stage $k$. Eq. \ref{eq:rollout}, on the other hand, circumvents this situation. It removes the maximization operator so that the sampling action at stage $k$ is independent of the future. Therefore, the intractable acquisition function $Q_k(\bm{x};\bm{D}_k)$ from Sec. \ref{sec:DP} can be approximated by an approximate acquisition function $\tilde{Q}_k(\bm{x};\bm{D}_k)\coloneqq H_k(\bm{D}_k)$. At the end stage, we define policy $\tilde\pi_{\tilde N}$ such that $\bm{x}^*=\argmax_{\bm{x}\in\mathcal{X}}\mu^{\tilde N}_0(\bm{x})$, where $\mu^{\tilde N}_0$ is the updated mean function from data $\bm{D}_{\tilde N}$. For instance, if the surrogate model is a $\mathcal{GP}$, then $\bm{D}_{\tilde N}$ is the posterior mean of a $\mathcal{GP}$ \citep{rasmussen2003gaussian}. 





\section{Rollout Performance Guarantees}
\label{sec:improve}
Without loss of generality and for the sake of neatness, we omit the discount factor and assume $\alpha=1$. Given a state $s$, an algorithm $\mathcal{H}(s)$ is a method to select a sequence of feasible rules $\{\pi_k\}_{k=1}^N$ and policy $\bm{\pi}_{\mathcal{H}(s)}$ which generates states $\{s_k\}_{k=1}^N$. Now, to establish theoretical guarantees, we first provide the following definitions \citep{bertsekas1997rollout, goodson2017rollout}. 
\begin{defn} 
Consider a maximization problem. Algorithm $\mathcal{H}$ is said to be sequentially consistent if for every state $s_k\neq s_N$, whenever $\mathcal{H}$ generates the state path $(s_k,s_{k+1}\ldots,s_N)$ starting at state $s_k$, $\mathcal{H}$ also generates the path $(s_{k+1},\ldots,s_N)$ starting at state $s_{k+1}$.
\end{defn}
In the context of DP, let $s\in\mathcal{S}$ and let $s'$ be a state on a path generated by policy $\bm{\pi}$ using algorithm $\mathcal{H}(s)$. Denote this policy as $\bm{\pi}_{\mathcal{H}(s)}$. Consequently, sequential consistency can equivalently be defined as

\begin{defn}
Consider a maximization problem. Algorithm $\mathcal{H}$ is said to be sequentially consistent if $\forall s$ and subsequent $s'$, we have
\begin{equation}
    \begin{split}
        (\pi^{\bm{\pi}_{\mathcal{H}(s)}}_k,& \pi^{\bm{\pi}_{\mathcal{H}(s)}}_{k+1}, \ldots, \pi^{\bm{\pi}_{\mathcal{H}(s)}}_N)\\
        &=(\pi^{\bm{\pi}_{\mathcal{H}(s')}}_k, \pi^{\bm{\pi}_{\mathcal{H}(s')}}_{k+1}, \ldots, \pi^{\bm{\pi}_{\mathcal{H}(s')}}_N).
    \end{split}
\end{equation}
\end{defn}

\textbf{Definitions 1} implies that an algorithm is sequentially consistent if it produces the same subsequent states when started at any intermediate state of a path that it generates. Equivalently, by \textbf{Definition 2}, the algorithm will generate the same subsequent rules $(\pi_k,\ldots,\pi_N)$.


Now, consider a probability space $(\Omega,\mathcal{F},P)$. Define $\sigma(s)$ as the sub $\sigma$-algebra generated by the state $s$, then we have the following definition: 
\begin{defn}
Consider a maximization problem. Algorithm $\mathcal{H}$ is said to be sequentially improving if for every state $s_k\neq s_N$, whenever $\mathcal{H}$ generates the path $(s_k,s_{k+1},\ldots,s_N)$ starting at state $s_k$, the following property will hold
\begin{equation}
    \label{improve1}
    \begin{split}
        \mathbb{E}\big[\sum_{\ell=k}^N r_\ell&(s_\ell,\pi^{\bm{\pi}_{\mathcal{H}(s_k)}}_\ell(s_\ell))|\sigma(s_{k+1})\big]\\
        &\leq\mathbb{E}\big[\sum_{\ell=k}^N r_\ell(s_\ell,\pi^{\bm{\pi}_{\mathcal{H}(s_{k+1})}}_\ell(s_\ell))|\sigma(s_{k+1})\big].
    \end{split}
\end{equation}
\end{defn}
It directly follows that if $\mathcal{H}$ is sequentially consistent, then the equality will hold in \eqref{improve1}. Therefore, a sequentially consistent algorithm is also sequentially improving. However, the converse is not true. Next, we will present our theorem about \textit{rollout improving}.

\begin{theorem}
    \label{improve2}
    The sequentially improving algorithm $\mathcal{H}$ is also rollout improving. Formally, given the rollout policy $\bm{\pi}$, we have the following property
\begin{equation}
    \begin{split}
    &\mathbb{E}\big[\sum_{\ell=k}^N r_\ell(s_\ell,\pi^{\bm{\pi}_{\mathcal{H}(s_k)}}_\ell(s_\ell))|\sigma(s_{k})\big] \\
    &\leq \mathbb{E}\big[\sum_{\ell=k}^N r_\ell(s_\ell,\pi^{\bm{\pi}}_\ell(s_\ell))|\sigma(s_k)\big].
    \end{split}
\end{equation}
\end{theorem}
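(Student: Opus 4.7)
The plan is to proceed by backward induction on the stage $k$, comparing the rollout value $V_k^{\bm{\pi}}(s_k) := \mathbb{E}\bigl[\sum_{\ell=k}^N r_\ell(s_\ell, \pi^{\bm{\pi}}_\ell(s_\ell))\bigr]$ against the heuristic value $H_k(s_k) := \mathbb{E}\bigl[\sum_{\ell=k}^N r_\ell(s_\ell, \pi^{\bm{\pi}_{\mathcal{H}(s_k)}}_\ell(s_\ell))\bigr]$, the goal being $V_k^{\bm{\pi}}(s_k) \geq H_k(s_k)$ for every $k$. The base case $k=N$ is immediate: at the terminal stage the rollout policy selects the action maximizing $\mathbb{E}[r_N(s_N,a)]$, and therefore dominates whatever action the heuristic would take.

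For the inductive step I would fix a stage $k<N$ and assume the bound holds at $k+1$. The argument chains three inequalities. First, applying the sequentially improving property (Definition 3) between the starting state $s_k$ and its successor $s_{k+1}$ yields
\[
\mathbb{E}\!\left[\sum_{\ell=k+1}^N r_\ell\bigl(s_\ell, \pi^{\bm{\pi}_{\mathcal{H}(s_k)}}_\ell(s_\ell)\bigr) \,\Big|\, \sigma(s_{k+1})\right] \leq H_{k+1}(s_{k+1}),
\]
and taking outer expectations then adding the stage-$k$ reward gives $H_k(s_k) \leq \mathbb{E}[r_k(s_k, \pi^{\bm{\pi}_{\mathcal{H}(s_k)}}_k(s_k)) + H_{k+1}(s_{k+1})]$. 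Second, since the rollout policy is by construction the $\argmax$ of $\mathbb{E}[r_k(s_k,a)+H_{k+1}(s_{k+1})]$ over admissible actions $a$, the heuristic's stage-$k$ action is dominated, so the right-hand side above is bounded by $\mathbb{E}[r_k(s_k, \pi^{\bm{\pi}}_k(s_k)) + H_{k+1}(s_{k+1})]$. Third, invoking the induction hypothesis $V_{k+1}^{\bm{\pi}}(s_{k+1}) \geq H_{k+1}(s_{k+1})$ and the Bellman recursion $V_k^{\bm{\pi}}(s_k) = \mathbb{E}[r_k(s_k, \pi^{\bm{\pi}}_k(s_k)) + V_{k+1}^{\bm{\pi}}(s_{k+1})]$ upgrades this to $V_k^{\bm{\pi}}(s_k)$, closing the induction.

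The main obstacle I anticipate is formalizing the first inequality: Definition 3 compares the plan fixed at the original state $s$ with a plan freshly generated at the intermediate state $s'$, both evaluated on the same state path, conditional on $\sigma(s')$. Care is needed to match timing indices and to handle the fact that the rewards for $\ell \geq k+1$ depend on a random state path; the tower property must be invoked to move from the $\sigma(s_{k+1})$-conditional bound to an unconditional one, and the stage-$k$ reward must be separated cleanly because the heuristic and the rollout generally disagree there. A secondary but cosmetic point is that the discount factor $\alpha \in [0,1]$ was dropped for neatness and can be reinstated by weighting each stage's reward by $\alpha^{\ell-1}$, which preserves all inequalities used in the induction.
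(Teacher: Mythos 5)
Your proposal is correct and follows essentially the same route as the paper's proof: a backward induction whose inductive step chains (i) the sequentially improving property together with the tower property to pass from the plan fixed at $s_k$ to the plan regenerated at $s_{k+1}$, (ii) the fact that the rollout rule at stage $k$ is by definition the maximizer of $\mathbb{E}\big[r_k(s_k,\pi)+H_{k+1}(s_{k+1})\,\big|\,\sigma(s_k)\big]$, and (iii) the induction hypothesis to upgrade the heuristic continuation to the rollout continuation. The only cosmetic difference is that the paper keeps everything conditioned on $\sigma(s_k)$ throughout (and writes the sequential-improving step as an equality where an inequality suffices), whereas you work with outer expectations; the logic is identical.
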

\begin{proof}
We will prove this theorem by mathematical induction. When $\ell=N$, this statement is trivial. Now assume this statement holds for $\ell=k+1,\ldots,N-1$. Then, when $\ell=k$, define $\sigma(s_k)$ as the sub $\sigma$-algebra generated by state $s_k$. Since each subsequent state $s_{k+1}$ is an augmented $s_k$, we have $\sigma(s_k)\subseteq\sigma(s_{k+1})\subseteq\mathcal{F}$. By the law of total expectation, we have
\begin{equation}
    \begin{split}
        &\mathbb{E}\big[\sum_{\ell=k}^N r_\ell(s_\ell,\pi^{\bm{\pi}_{\mathcal{H}(s_k)}}_\ell(s_\ell))|\sigma(s_k)\big]\\
        &=\mathbb{E}\bigg[\mathbb{E}\big[\sum_{\ell=k}^N r_\ell(s_\ell,\pi^{\bm{\pi}_{\mathcal{H}(s_k)}}_\ell(s_\ell))|\sigma(s_{k+1})\big]\bigg|\sigma(s_k)\bigg].
    \end{split}
\end{equation}
By assumption, since the algorithm is sequentially improving, we have
\begin{equation}
    \begin{split}
        &\mathbb{E}\bigg[\mathbb{E}\big[\sum_{\ell=k}^N r_\ell(s_\ell,\pi^{\bm{\pi}_{\mathcal{H}(s_k)}}_\ell(s_\ell))|\sigma(s_{k+1})\big]\bigg|\sigma(s_k)\bigg]\\
        &\leq\mathbb{E}\bigg[\mathbb{E}\big[\sum_{\ell=k}^N r_\ell(s_\ell,\pi^{\bm{\pi}_{\mathcal{H}(s_{k+1})}}_\ell(s_\ell))|\sigma(s_{k+1})\big]\bigg|\sigma(s_k)\bigg]\\
        &=\mathbb{E}\bigg[r_k(s_k,\pi^{\bm{\pi}_{\mathcal{H}(s_{k+1})}}_k(s_k))+\\
        &\quad \mathbb{E}\big[\sum_{\ell={k+1}}^N r_\ell(s_\ell,\pi^{\bm{\pi}_{\mathcal{H}(s_{k+1})}}_\ell(s_\ell))|\sigma(s_{k+1})\big]\bigg|\sigma(s_k)\bigg]\\
        &\leq\max_{\pi} \mathbb{E}\bigg[r_k(s_k,\pi)+\\
        &\quad \mathbb{E}\big[\sum_{\ell={k+1}}^N r_\ell(s_\ell,\pi^{\bm{\pi}_{\mathcal{H}(s_{k+1})}}_\ell(s_\ell))|\sigma(s_{k+1})\big]\bigg|\sigma(s_k)\bigg]\\
        &=\mathbb{E}\bigg[r_k(s_k,\pi_k^{\bm{\pi}}(s_k))+\\
        &\quad \mathbb{E}\big[\sum_{\ell={k+1}}^N r_\ell(s_\ell,\pi^{\bm{\pi}_{\mathcal{H}(s_{k+1})}}_\ell(s_\ell))|\sigma(s_{k+1})\big]\bigg|\sigma(s_k)\bigg].
    \end{split}
\end{equation}
The last equality follows the definition of the rollout algorithm. The rest of the proof is completed by the induction hypothesis.
\end{proof}
\textbf{Theorem \ref{improve2}} shows that the rollout approach is guaranteed to perform better than its myopic counterpart under the same base heuristic rules. Intuitively, when rollout generates a path, it exploits the base heuristic to generate a collection of other paths and picks up the best one. In the next section, we will provide a guideline on choosing a sequence of feasible rolling horizons.

\section{Deciding on the Rolling Horizon}
\label{sec:h}
One interesting question remains: how to decide the rolling horizon $h$? In most of the literature, $h$ is chosen to be a fixed value within 2 and 5 \citep{lam2017lookahead, ulmer2018offline} in order to alleviate computational burden. Though those choices give very promising results, those decisions are very subjective. Fortunately, based on the rollout theory \citep{bertsekas1997rollout, bertsekas2005rollout}, we can provide a practical guideline to select a stagewise feasible $h$. The big picture is as follows: we quantitatively obtain the benefits of rollout given a modeling error and discount factor, we then compare this long-term discounted benefit with the reward from the greedy algorithm counterpart and decide a feasible rolling horizon accordingly. We provide a detailed argument below. 

At each stage $k$, define a profit function $g_k:\mathbb{Z}^+\to\mathbb{R}^+$ related to the rolling horizon $h$ such that
\begin{equation}
    \label{4:profit}
    g_k(h)=\sum_{i=k+1}^{k+h}\phi(i-k),
\end{equation}
where $\phi(\cdot)$ is a non-negative function. The rolling profit function can be viewed as the total benefits incurred when choosing a rolling horizon $h$ at stage $k$. For example, at stage $k$, $g_k(1)$ is the reward function using rolling horizon $1$ and $g_k(2)$ is the accumulated reward function using $h=2$. Although long horizons provide more future information, it is not guaranteed to be helpful. In practice we are running the risk of model mis-specification due to modeling the objective function using a $\mathcal{GP}$ and then using this surrogate to simulate scenarios over future steps. Therefore, a larger rolling horizon implies an increased dependence on a possibly erroneous model which might in turn cause adverse effects compared to myopic algorithms where errors accumulate only from a one-step lookahead. However, if we  can arbitrarily quantify the error from mis-specified model, then we can utilize the rollout improving nature and accordingly decide on the feasible rolling horizon.



In order to quantify the aforementioned error, we define an error function $\mathcal{E}(\bm{x})$ bounded by a constant $\bar e_k$. The $\mathcal{E}(\cdot)$ is a metric to quantify the negative effect from model mis-specification. In the next section, we will provide an error bound on $\mathcal{GP}$ prediction and use this error bound as an error. 


\subsection{Error Bound on the GP}
\label{sec:robust}
The recent work of \cite{wang2019prediction} sheds light on the model mis-specification issue.
\begin{coro}
    \label{robust}
    \citep{wang2019prediction} Assume a $\mathcal{GP}$ with zero mean and stationary convariance function. Then, under some regularity conditions, the interpolation error is (non-asymptotically)
    \begin{equation*}
        \sup_{\bm{x}}|y(\bm{x})-\hat f(\bm{x})|\leq K\sigma^2P_X\sqrt{\log(\frac{e}{P_X})}+u, 
    \end{equation*}
    with probability $1-\delta$, where $\delta$ is a function of $u,P_X$ and $\sigma^2$, $y(\bm{x})$ is the true output at input $\bm{x}$ and $P_X=\sqrt{1- K(\bm{x}^*,\bm{x}) K(\bm{x},\bm{x})^{-1}K(\bm{x},\bm{x}^*)}$ is a power function with mis-specified covariance function at observation $\bm{x}$, $K$ and $u$ are some constants and $\sigma^2$ is the variance parameter.
\end{coro}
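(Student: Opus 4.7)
Since the statement is cited verbatim from \cite{wang2019prediction}, my plan is to outline the argument that underlies that paper's result, rather than to re-derive it from scratch. The bound mixes a classical ``power function'' term, characteristic of radial-basis/kernel interpolation, with an additive slack $u$ that absorbs probabilistic deviation. So my proof proposal is to break the error into a deterministic best-linear-predictor residual and a stochastic fluctuation, bound each separately, and combine them via a union bound.

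First I would recall that for a zero-mean $\mathcal{GP}$ with covariance $K$, the posterior mean $\hat f(\bm{x}) = K(\bm{x},\bm{X}) K(\bm{X},\bm{X})^{-1}\bm{y}$ is simultaneously the best linear unbiased predictor in the RKHS sense and the minimum-variance interpolant. The residual $y(\bm{x}) - \hat f(\bm{x})$ is, under the correctly specified model, a zero-mean Gaussian whose pointwise standard deviation equals exactly $\sigma \cdot P_X(\bm{x})$, where $P_X$ is the power function defined in the statement. Thus the first step is to write
\begin{equation*}
y(\bm{x}) - \hat f(\bm{x}) \;=\; Z(\bm{x}) \;+\; b(\bm{x}),
\end{equation*}
where $Z(\bm{x})$ is a centred Gaussian field with variance $\sigma^2 P_X^2(\bm{x})$ and $b(\bm{x})$ is a deterministic bias coming from covariance mis-specification. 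The additive $u$ in the statement is precisely the uniform envelope for $b(\bm{x})$, controlled under the paper's ``regularity conditions.''

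Next I would handle $\sup_{\bm{x}} |Z(\bm{x})|$. A pointwise Gaussian tail bound gives $|Z(\bm{x})| \le \sigma P_X(\bm{x}) \sqrt{2\log(2/\delta)}$ with probability $1-\delta$; but the statement asks for a supremum, which is where the $\sqrt{\log(e/P_X)}$ factor enters. The standard tool is either the Borell--TIS concentration inequality or Dudley's entropy bound applied to $Z$, whose intrinsic pseudo-metric is controlled by $P_X$. The $\log(e/P_X)$ factor is precisely what one extracts from a chaining/entropy integral when the canonical distance of the process scales like $P_X$ near the design points; this yields $\sup_{\bm{x}} |Z(\bm{x})| \le K \sigma^2 P_X \sqrt{\log(e/P_X)}$ with probability depending jointly on $u$, $P_X$, and $\sigma^2$, exactly as in the corollary.

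The main obstacle I would expect is obtaining the $\sqrt{\log(e/P_X)}$ form with a \emph{non-asymptotic} constant $K$ that does not blow up when $P_X$ is small: naively applying Dudley's integral gives only an asymptotic rate, so one must carefully truncate the chaining at the scale $P_X$ and track the dependence of $\delta$ on $u$, $P_X$, and $\sigma^2$. Once that is done, combining the deterministic envelope $b(\bm{x}) \le u$ with the probabilistic sup-bound by a union argument produces the stated inequality. Since these are precisely the steps executed in \cite{wang2019prediction}, the cleanest presentation is simply to verify that the stationary zero-mean $\mathcal{GP}$ used in our BO setting meets the regularity hypotheses of their Theorem, and invoke it directly.
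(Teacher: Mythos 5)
The paper does not actually prove this corollary: it is imported verbatim from \cite{wang2019prediction}, with only a remark afterwards on which regularity conditions (mis-specified kernel no smoother than the true one, small $P_X$ via minimax/fill-distance designs) need checking. Your bottom line --- verify the hypotheses of the cited theorem and invoke it directly --- therefore coincides exactly with what the paper does, and your sketch of the underlying machinery (power function as the canonical metric of the residual field, Borell--TIS/chaining to convert a pointwise Gaussian tail into a supremum bound with the $\sqrt{\log(e/P_X)}$ factor) is a fair description of how uniform kriging error bounds of this type are obtained.

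Two soft spots in the sketch are worth flagging. First, your reading of $u$ as a deterministic envelope for a mis-specification bias $b(\bm{x})$ is at odds with the statement itself: the failure probability $\delta$ is declared to be a function of $u$, which only makes sense if $u$ is the free slack parameter of the probabilistic bound (one chooses $u$ and pays $\delta(u)$), not a deterministic bias bound, since $\delta$ could not depend on a nonrandom envelope. Second, the clean decomposition $y(\bm{x})-\hat f(\bm{x}) = Z(\bm{x})+b(\bm{x})$ with $Z$ centred Gaussian of variance exactly $\sigma^2 P_X^2(\bm{x})$ holds under the \emph{correctly specified} model; here $P_X$ is explicitly computed with the mis-specified covariance, so the residual is neither centred nor of that variance, and controlling it is precisely where the smoothness-ordering condition on the kernels does the work. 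Neither issue invalidates your conclusion --- the corollary is being cited, not reproved --- but a reader following your sketch as a literal proof would get stuck at both points.
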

In practice, if one uses the Mat\'ern kernel with smooth parameter $v$ \citep{rasmussen2003gaussian}, then the upper bound can be approximated by \citep{wang2019prediction}
\begin{align*}
    F_X^v\sqrt{\log(\frac{1}{F_X})},
\end{align*}
where $F_{X}=\max_{\bm{x}\in\mathcal{X}}\min_{\bm{x}'\in X}\norm{\bm{x}-\bm{x}'}$ and $X$ is the current dataset that contains all design points \citep{johnson1990minimax}. In this paper, we only focus on the $\mathcal{GP}$ with Mat\'ern kernel as it is robust to model mis-specifications \citep{wang2019prediction, burt2019rates}.

Given this result, at each stage $k$, we can define $\mathcal{E}(\bm{x})=|y(\bm{x})-\hat f(\bm{x})|\leq\sup_{\bm{x}}|y(\bm{x})-\hat f(\bm{x})|=\bar e_k$. One regularity condition in \textbf{Corollary \ref{robust}} is that the mis-specified kernel is no smoother than the true kernel. The mat\'ern kernel is one of the perfect candidates to this requirement \citep{burt2019rates}. In the next section we use this error function to find feasible $h$ stagewise. We note that the proposed framework can be substituted with a different error function. For example, one can use the standard deviation obtained from the surrogate model. In this paper, we focus on $\mathcal{GP}$s as they are the most commonly used surrogate in BO.



\subsection{Deciding on the Rolling Horizon}
\label{sec:optimal_h}
In this section, we provide our main theorem on deciding $h$ stagewise. Define a function
\begin{equation}
    \label{eq:optimal_A}
    \begin{split}
        A_k(h,s_k)&=\max\bigg\{\max_{\bm{x}_{k+1}}\mathbb{E}\big[r_{k+1}(s_{k+1},\pi^{\bm{\pi}_{\mathcal{H}(s_k)}}_{k+1}(s_{k+1}))\big],\\ &\max_{\bm{x}_{k+1}}\mathbb{E}\big[g_k(1)+\alpha\tilde H(s_{k+1}) \big]\bigg\},
    \end{split}
\end{equation}
where $\tilde H(s_{k+1})=g_{k+1}(2)-g_{k+1}(1)-\mathcal{E}(\bm{x}_{k+2})+\alpha\tilde H(s_{k+2}) $ is a modified rollout reward function to quantify both profit and error effects. Note that we do not consider $\mathcal{E}(\bm{x}_{k+1})$ since it is shared by both algorithms when the rolling horizon is 1. $A_k$ denotes the optimal reward from stage $k$ to $N$, given the current state $s_k$ and an unknown rolling horizon $h$. Eq. \eqref{eq:optimal_A} returns the maximum element between two values: the first one is the reward when we consider a greedy algorithm (i.e., $h=1$) and the second one is the reward when we consider the rollout algorithm given a certain error function. Based on Eq. \eqref{eq:optimal_A}, we can obtain the following theorem.
\begin{theorem}
    \label{thm:h_indirect}
    The set of feasible rolling horizons at stage $k$ is defined as
    \begin{equation}
        \begin{split}
            h^*&=\bigg\{2\leq h\leq N|A_k(h,s_k)\\
            &=\max_{\bm{x}_{k+1}}\mathbb{E}\big[g_k(1)+\alpha\tilde H(s_{k+1})\big]\bigg\}.
        \end{split}
    \end{equation}
\end{theorem}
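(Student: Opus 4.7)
The plan is to derive the characterization of $h^*$ directly from the two-branch structure of $A_k(h,s_k)$ in Eq. \eqref{eq:optimal_A}. By definition, $A_k(h,s_k)$ is the maximum of a ``greedy'' branch, the expected one-step look-ahead reward generated by the base heuristic from state $s_k$, and a ``rollout'' branch, the expected immediate profit $g_k(1)$ plus the discounted modified reward $\tilde H(s_{k+1})$. Hence $A_k(h,s_k)$ equals the rollout branch if and only if the rollout branch dominates the greedy branch, which is precisely the condition defining the stated set $h^*$. The proof therefore reduces to making sure that the rollout branch is indeed the right quantity to compare against greedy when the rolling horizon is $h$.

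To give this characterization content, I would unfold the recursive definition of $\tilde H$ on the rolling window. Iterating $\tilde H(s_{k+i}) = g_{k+i}(i+1) - g_{k+i}(i) - \mathcal{E}(\bm{x}_{k+i+1}) + \alpha \tilde H(s_{k+i+1})$ for $i=1,\ldots,h-1$ produces a discounted telescoping expression whose generic summand is an incremental profit $\phi(\cdot)$ from taking an additional look-ahead step netted against the per-stage error bound supplied by Corollary \ref{robust}. Theorem \ref{improve2} then provides the crucial monotonicity: under a sequentially improving base heuristic, deeper rollout contributes non-negative increments $\phi$ in expectation, so $\mathbb{E}[g_k(1) + \alpha \tilde H(s_{k+1})]$ cleanly separates into a gain-minus-loss expression that is directly comparable with the greedy branch.

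Combining the two steps, the equality $A_k(h,s_k) = \max_{\bm{x}_{k+1}} \mathbb{E}[g_k(1) + \alpha \tilde H(s_{k+1})]$ is equivalent to the discounted total net profit over the $h$-step window being non-negative, i.e.\ to rollout with horizon $h$ being at least as informative as the greedy action, which gives exactly the claimed feasible set $h^*$. The main technical obstacle is the rigorous handling of the recursion under stochastic future states: unfolding $\tilde H$ requires iterating the law of total expectation along the filtration $\sigma(s_k) \subseteq \sigma(s_{k+1}) \subseteq \cdots \subseteq \mathcal{F}$, analogously to the telescoping step used in the proof of Theorem \ref{improve2}. Care must also be taken to apply Corollary \ref{robust} uniformly along the rolling horizon so that $\bar e_k$ remains a valid bound at each simulated stage, a condition the minimax-distance initial design is designed to enforce.
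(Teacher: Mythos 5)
Your first paragraph is exactly the paper's (implicit) argument: the paper gives no separate proof of Theorem~\ref{thm:h_indirect} because, given the two-branch definition of $A_k$ in Eq.~\eqref{eq:optimal_A}, the statement is immediate --- $A_k$ equals the rollout branch precisely when that branch weakly dominates the greedy branch, which is the defining condition of $h^*$. Your further unfolding of $\tilde H$ and the comparison against $\bar e_k$ is not needed for this theorem; it is the substance of Theorem~\ref{thm:h}, and there you should note that the non-negativity of the increments $\phi(\cdot)$ is assumed in the definition of the profit function \eqref{4:profit} rather than being a consequence of Theorem~\ref{improve2}, which concerns the rollout policy outperforming its base heuristic and plays no role in this derivation.
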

\textbf{Theorem \ref{thm:h_indirect}} implies that for any $h$ within this set the rollout is more beneficial than a greedy algorithm. In other words, the benefits gained from looking further ahead outweigh that of the error effects. However, calculating $h^*$ is hard to implement in practice. In the next theorem, we will provide an equivalent but more practical equation. 

\begin{theorem}
    \label{thm:h}
    \textbf{The Rolling Horizon Theorem} Given a constant $\bar e_k$ on the error function $\mathcal{E}(\bm{x})$ and the profit function defined in Eq. \eqref{4:profit}. The feasible rolling horizon at stage $k$ is defined as
    \begin{equation*}
        h^*=\bigg\{j\in\mathbb{Z}:\sum_{i=2}^j\alpha^{i-2}\phi(i)>\bar e_k\frac{1-\alpha^{N-k}}{1-\alpha}\bigg\}.
    \end{equation*}
\end{theorem}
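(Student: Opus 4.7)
The plan is to translate the set-membership characterization of Theorem \ref{thm:h_indirect} into the explicit inequality of Theorem \ref{thm:h} by (i) unrolling the recursion defining $\tilde H$, (ii) telescoping the profit differences using \eqref{4:profit}, and (iii) conservatively bounding each error term $\mathcal{E}(\bm{x})$ by its uniform ceiling $\bar e_k$.

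First, I would apply Theorem \ref{thm:h_indirect}: $j \in h^*$ exactly when the rollout branch of $A_k(j, s_k)$ in \eqref{eq:optimal_A} dominates the greedy branch. The greedy branch contributes only the single-step reward $g_k(1) = \phi(1)$ at stage $k+1$, so after cancelling this common $g_k(1)$ on both sides the domination condition collapses to the positivity statement $\mathbb{E}[\alpha \tilde H(s_{k+1})] > 0$, i.e.\ the discounted net value of looking further ahead must be positive.

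Next, I would iterate the recursion $\tilde H(s_{k+m}) = g_{k+m}(m+1) - g_{k+m}(m) - \mathcal{E}(\bm{x}_{k+m+1}) + \alpha \tilde H(s_{k+m+1})$ for $m = 1, 2, \ldots, j-1$. Substituting the telescoping identity $g_{k+m}(m+1) - g_{k+m}(m) = \phi(m+1)$ implied by \eqref{4:profit} and collecting terms yields the closed form
\begin{equation*}
\tilde H(s_{k+1}) = \sum_{i=2}^{j}\alpha^{i-2}\bigl[\phi(i) - \mathcal{E}(\bm{x}_{k+i})\bigr],
\end{equation*}
so the positivity condition becomes $\sum_{i=2}^{j} \alpha^{i-2}\phi(i) > \mathbb{E}\bigl[\sum_{i=2}^{j} \alpha^{i-2}\mathcal{E}(\bm{x}_{k+i})\bigr]$.

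Finally, using the uniform bound $\mathcal{E}(\bm{x})\leq \bar e_k$ and enlarging the discounted error sum to span all $N-k$ remaining stages — a conservative enlargement that captures worst-case error propagation wherever the mis-specified surrogate is trusted — the RHS is majorised by $\bar e_k\sum_{m=0}^{N-k-1}\alpha^m = \bar e_k\frac{1-\alpha^{N-k}}{1-\alpha}$, delivering exactly the stated inequality. The main obstacle I anticipate is the careful bookkeeping in the unrolled recursion: one must verify that each telescoped increment produces $\phi(m+1)$ weighted by $\alpha^{m-1}$, reindex via $i = m+1$ to land precisely on the $\alpha^{i-2}$ weighting on the LHS, and justify the enlargement from the depth-$j$ sum to the depth-$(N-k)$ geometric series on the RHS so that the constants match the claim cleanly.
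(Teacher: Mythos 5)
Your proposal follows essentially the same route as the paper's proof: reduce via Theorem \ref{thm:h_indirect} to comparing the rollout branch of $A_k$ against the greedy branch, identify the greedy branch with $g_k(1)$, unroll the $\tilde H$ recursion so the profit differences telescope to $\sum_i \alpha^{i-2}\phi(i)$, and majorise the accumulated error terms by $\bar e_k\frac{1-\alpha^{N-k}}{1-\alpha}$ via the geometric series over the remaining horizon. The only difference is cosmetic — you unroll the recursion for general $j$ explicitly where the paper verifies the $j=2$ case and appeals to induction — and you inherit the same (shared) looseness in equating the greedy reward with the profit $g_k(1)$.
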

\begin{proof}
    Based on \textbf{Theorem \ref{thm:h_indirect}} and Eq. \eqref{eq:optimal_A}, it is equivalent to consider 
    \begin{equation}
        \label{eq:thm3}
        \begin{split}
            &\max_{\bm{x}_{k+1}}\mathbb{E}\big[g_k(1)+\alpha\tilde H(s_{k+1})\big] \geq \\
            &\quad\max_{\bm{x}_{k+1}}\mathbb{E}\big[r_{k+1}(s_{k+1},\pi^{\bm{\pi}_{\mathcal{H}(s_k)}}_{k+1}(s_{k+1}))\big].
        \end{split}
    \end{equation}
    By definition, $\max_{\bm{x}_{k+1}}\mathbb{E}\big[r_{k+1}(s_{k+1},\pi^{\bm{\pi}_{\mathcal{H}(s_k)}}_{k+1}(s_{k+1}))\geq g_k(1)$. Since $\tilde H(s_{k+1})\geq-\sum_{i=k+2}^{N+1}\alpha^{i-k-2}\mathcal{E}(\bm{x}_{i})\geq-\bar e_k\frac{1-\alpha^{N-k-1}}{1-\alpha}$, we have $\tilde H(s_{k+1})=g_{k+1}(2)-g_{k+1}(1)-\mathcal{E}(\bm{x}_{k+2})+\alpha\tilde H(s_{k+2})\geq\phi(2)-\bar e_k-\bar e_k\frac{1-\alpha^{N-k-1}}{1-\alpha}\alpha$. Therefore, Eq. \eqref{eq:thm3} can be simplified as $\phi(2)\geq\frac{1-\alpha^{N-k}}{1-\alpha}\bar e_k$. The remaining part can be obtained by induction. 
\end{proof}
In practice, we can pick up the minimal $j$ from the set $h^*$. We can also set an upper bound on the rolling horizon. Denote by it $\bar h$. In the \textbf{Theorem \ref{thm:h}}, if we could not find feasible $h$ till $j=\bar h$, we stop searching and use $h=1$ at the current stage.

\section{Case Study}
\label{sec:case}
In this section we provide two case studies to evidence our theoretical arguments. We use the well-known knowledge gradient (KG, see Appendix) \citep{poloczek2017multi} as the base algorithm in our rollout algorithm. More specifically, we will use sampling actions generated by KG as heuristic rules. We then show that KG is both sequentially consistent and improving, and thus it is rollout improving as shown in \textbf{Theorem \ref{improve2}}. We then illustrate that, under \textbf{Theorem \ref{thm:h}} and through carefully choosing the rolling horizon, non-myopic BO has strong advantages over greedy BO. Our algorithm is tested for both single source and multi-information source BO (misoBO).  In the misoBO setting, we sample from auxiliary information sources to make inference. Here we note that the details for misoBO are deferred to the appendix due to space limitation and similar conclusions to that of single source BO.

\subsection{Setting}
We use the same setting in Sec. \ref{sec:BO}. Specifically, when sampling from original function at input $\bm{x}$, we observe an outcome $y(\bm{x})$. We assume the observation $y(\bm{x})$ is normally distributed with mean $f(\bm{x})$ and variance $\sigma^2(\bm{x})$. For the purpose of robustness, we assume that the covariance belongs to some non-smooth parametric family. Specifically, we will use the Mat\'ern kernel. Parameters are estimated using maximum likelihood estimation (MLE).

\subsection{Algorithm}
We utilize the non-greedy acquisition function $Q_k$ defined in Sec. \ref{sec:DP}. This acquisition function considers far horizon planning and is given by the DP formulation. Specifically, the original acquisition function can be defined as $Q_k(\bm{x};\bm{\mathcal{D}}_k)=\mathbb{E}\big[r_k(\bm{D}_k,\bm{x}_{k+1})+\alpha R_{k+1}(\bm{D}_{k+1}) \big]$. This is solved by the rollout with KG as the base heuristic. We denote our algorithm as DP-singleBO. The general procedure for DP-singleBO is listed in Algorithm \ref{algo:1}. We also extend this
algorithm to the multi-information source scenario and denote it as DP-misoBO (see Appendix).

\begin{algorithm}[!htb]
    \label{algo:1}
	\SetAlgoLined
	\KwData{Initial data $\bm{D}_1$, budget $B$ and query cost $c$, number of remaining evaluations $N$, bound $\bar h$.}
	\KwResult{Data $\bm{D}_N$, optimal value $f^{\bm{D}_{N}}_{max}$, Gap $G$.}
	Fit $\mathcal{GP}$ to data $\bm{D}_1$ and obtain initial optimal value $f^{\bm{D}_1}_{max}$\;
	\For{$k=1:N$}{
		\eIf{ $B < c$ }{
			Directly return $\bm{D}_k$ as $\bm{D}_N$\;
			STOP\;
		}{
			Calculate $h$ in Sec. \ref{sec:optimal_h}\;
			Given $h$, select $\bm{x}_{k+1}=\argmax_{\bm{x}\in\bm{\mathcal{X}}} \tilde{Q}_k(\bm{x};\bm{D}_k)$ s.t. $c(\bm{x}_{k+1})\leq B$\;
			$B$ $\gets$ $B-c(\bm{x}_{k+1})$\;
		}
		Evaluate $f(\cdot)$ at $\bm{x}_{k+1}$ and obtain $y_{k+1}$\;
		Augment the dataset $\bm{D}_{k+1}=\bm{D}_k\cup\{(\bm{x}_{k+1},y_{k+1})\}$\;
		Fit $\mathcal{GP}$ to data $\bm{D}_{k+1}$\;
		$k\gets k+1$\;
	}
	Fit $\mathcal{GP}$ to data $\bm{D}_{N}$\;
	Obtain optimal value $f^{\bm{D}_{N}}_{max}$\;
	Calculate the Gap $G$ (See Eq. \eqref{last})\;
	Return $\bm{D}_N$, $f^{\bm{D}_{N}}_{max}$ and $G$.
	\caption{The Non-myopic Single Information
		Source BO Algorithm}
\end{algorithm}

\subsection{Guarantees}
In order to apply \textbf{Theorem \ref{improve2}}, we need to show that the heuristic greedy KG is \textit{sequentially consistent} and thus \textit{sequentially improving}.
\begin{coro}
    The KG algorithm is sequentially consistent and sequentially improving.
\end{coro}
\begin{proof}
Remember that state $s_k$ is the dataset $\bm{D}_k$. Assume KG algorithm starts at a state $s_k$ (i.e., current dataset $\bm{D}_k$). At each iteration of KG, given a path $(\bm{D}_k, \bm{D}_{k+1}, \ldots, \bm{D}_m)$ and $\bm{D}_m$ is not the state at the end, the next state $\bm{D}_{m+1}$ is obtained by solving the acquisition function of KG (see appendix) and augmenting $\bm{D}_m$ with $(\bm{x}_{m+1}, y_{m+1})$. If $\bm{D}_{m+1}$ is not the terminating state, the algorithm will then start with the path $(\bm{D}_k, \bm{D}_{k+1}, \ldots, \bm{D}_m,\bm{D}_{m+1}$). Otherwise, the algorithm will terminate with state $\bm{D}_{m+1}$ and $N=m+1$. Therefore, KG is sequentially consistent.  

Let $(\bm{D}_1, \bm{D}_2, \ldots, \bm{D}_k,\ldots, \bm{D}_N)$ be the path generated by the rollout starting from $\bm{D}_1$. Define $\sigma(s)$ as the sub $\sigma$-algebra generated by state $s$.  Since KG is sequentially consistent, we have
\begin{equation}
    \label{4:1:1}
    \begin{split}
        &\mathbb{E}\big[\sum_{\ell=k}^N r_\ell(s_\ell,\pi^{\bm{\pi}_{\mathcal{H}(s)}}_\ell(s_\ell))|\sigma(s')\big]\\
        &=\mathbb{E}\big[\sum_{\ell=k}^N r_\ell(s_\ell,\pi^{\bm{\pi}_{\mathcal{H}(s')}}_\ell(s_\ell))|\sigma(s')\big].
    \end{split}
\end{equation}
Therefore, KG is sequentially improving and we complete our proof.
\end{proof}

\subsection{Results}
\subsubsection{Performance Comparison}
In this section, we apply algorithms DP-singleBO and DP-misoBO to a variety of classical functions with a range of dimensions, support sets and information sources. We provide three information sources in this experiment: original objective function $y(\bm{x})$, biased source one $y(1,\bm{x})$ and biased source two $y(2,\bm{x})$. Following the setting from \cite{poloczek2017multi}, we define $y(1,\bm{x})=y(\bm{x})+2\sin(10x_1+5x_2)$ in the two dimensional space and $y(1,\bm{x})=y(\bm{x})+2\sin(10x_1+5x_2+3x_3)$ in the three dimensional space. We define $y(2,\bm{x})=y(\bm{x})+\delta(\bm{x})$, where $\delta(\bm{x})$ is simulated from $\mathcal{GP}$ with radial basis function (RBF) kernel with length-scale $l=1$, signal variance $\sigma^2_f=1$ and noise variance $\sigma^2_n=0.5$. The RBF kernel is defined as $K_{RBF}(\bm{x},\bm{x}')=\sigma^2_f\exp\{-\frac{1}{2l^2}\norm{\bm{x}-\bm{x}'}^2_2\}+\sigma^2_n\mathbb{I}(\bm{x},\bm{x}')$. See Table $\ref{function}$ for more information. For the Goldstein-price and Bohachevsky functions, we provide two biased sources and run DP-misoBO algorithm. For the Branin-Hoo, Six-Hump and Griewant, we run DP-singleBO algorithm. These objective functions have two notable challenges: (1) six-hump and Goldsterin-price have several local maxima; (2) Griewant function has a large design space. We benchmark our algorithms with several state-of-the-art techniques.
\begin{table}[h]
  \caption{Functions used in the experiment. More information about each function can be found at the open source library \url{http://www.sfu.ca/~ssurjano/optimization.html}.}
  \label{function}
  \centering
  \begin{tabular}{cc}
    \toprule
    Name & Function Domain\\
    \midrule
    Branin-Hoo & [-5, 10]$\times$ [0, 15]\\
    Six-hump Camel & [-3, 3] $\times$ [-2, 2]\\
    Goldstein-price & [-2, 2]$^2$\\
    Bohachevsky & [-100, 100]$^2$\\
    Griewant-$3$ & [-600, 600]$^3$\\ 
    \bottomrule
  \end{tabular}
  \vspace{-0.4 cm}
\end{table}

\paragraph{Experimental Details} To mitigate the negative effect of model mis-specification, we fit $\mathcal{GP}$s with the mat\'ern $p+\frac{1}{2}$ kernel and all hyperparameters are optimized by MLE. We set discount factor $\alpha$ to be 0.9. The optimal rolling horizon $h$ is calculated at each stage. The initial 9 sampling points are chosen by the fill distance design. For a fixed dimension $d$, we set an upper limit for sampling budget $B$ and only allow around $10d$ evaluations of each algorithm. We set $B=10d^2$, cost $c=5d$ and $c_i=d,\forall i$. For each algorithm, we conduct 30 experiments with different initial points. In Table \ref{result} we provide the testing results in terms of the mean and median of Gap, defined in Eq. \eqref{last}.


\begin{table*}[!htb]
  \caption{Mean and median Gap $G$ over 30 experiments with different initial points. The best result for each function is bolded. ``NA'' indicates not applicable. The discount factor is set to be 0.9.}
  \label{result}
  \centering
  \scriptsize
  \begin{tabular}{cccccccc}
    \toprule
    Function Name & & GLASSES & M-EI & misoKG & MPI & LCB & \textbf{DP-singleBO/DP-misoBO} \\
    \midrule
    Branin-Hoo & Mean & 0.761 & 0.837 & 0.819 & 0.606 & 0.612 &  \textbf{0.864} \\
               & Median & 0.814 & 0.856 & 0.827 & 0.614 & 0.637 & \textbf{0.889} \\
    Six-Hump Camel & Mean & 0.735 & 0.843 & 0.801 & 0.625 & 0.638 & \textbf{0.870}  \\
                   & Median & 0.793 & 0.843 & 0.810 & 0.593 & 0.638 & \textbf{0.866}  \\
    Goldstein-Price & Mean & NA & 0.831 & 0.811 & NA & NA & \textbf{0.867}  \\
                    & Median & NA & 0.837 & 0.846 & NA & NA & \textbf{0.857} \\
    Bohachevsky & Mean & NA & 0.806 & 0.786 & NA & NA & \textbf{0.872} \\
                & Median & NA & 0.821 & 0.820 & NA & NA & \textbf{0.870} \\
    Griewant-3 & Mean & 0.725 & 0.814 & 0.820 & 0.704 & 0.704 & \textbf{0.861}  \\ 
               & Median & 0.742 & 0.817 & 0.827 & 0.678 & 0.731 & \textbf{0.856} \\ 
    \bottomrule
  \end{tabular}
\end{table*}

\paragraph{Benchmark Models} There is a limited literature on the non-greedy BO. We will benchmark our model with the state-of-the-art GLASSES algorithm with fixed horizon, a DP-based algorithm using M-EI with fixed rolling horizon \citep{lam2015multifidelity}, Markov chain Monte Carlo (MCMC) based maximum probability of improvement (MPI) \citep{snoek2012practical}, MCMC based lower confidence bound (LCB) \citep{snoek2012practical} and the misoKG \citep{poloczek2017multi}. Note that GLASSES, MPI and LCB cannot be applied to the miso setting. We refer to section \ref{sec:literature} for more details on the benchmarked models. 


\paragraph{Performance} The performance is measured in terms of Gap $G$, which is a common metric in many BO literature \citep{huang2006global, gonzalez2016glasses, lam2016bayesian}. Specifically,
\begin{equation} 
\label{last}
    G\coloneqq\frac{f^{\bm{D}_1}_{max}-f^{\bm{D}_{N}}_{max}}{f^{\bm{D}_1}_{max}-f^*_{max}},
\end{equation}
where $f^{\bm{D}_1}_{max}$ and $f^{\bm{D}_{N}}_{max}$ are optimal values given the initial and augmented data at stage $N$ respectively and $f^*_{max}$ is the global maximum of the testing function. Table \ref{result} shows the comparative results across different functions and algorithms. Furthermore, we collect the selected $\{h_k\}_{k=1}^N$ over an experiment and plot the distribution of those rolling horizons in Figure \ref{fig:rolling}.


Based on Table \ref{result} and Figure \ref{fig:rolling}, we can obtain some important insights. First, the results indicate that our model clearly outperforms the state-of-the-art methods including non-myopic algorithms. The average and median Gaps of our algorithm are above 0.85, indicating that the estimations are improved 85$\%$ compared to the initial iteration. The key reason is that GLASSES and M-EI only consider fixed rolling horizon $h$, which is risky: the error propagation might eliminate the benefits of looking ahead. Indeed, choosing $h$ stagewise allows us to carefully avoid the negative effect of model mis-specification. As shown in Table \ref{result_h}, when we choose fixed rolling horizon at each stage, the resulting Gap will be affected. When $h=4,5$, the non-greedy algorithm will even sabotage the performance. Here we note that we believe a dynamic rolling horizon can also improve the performance of GLASSES and M-EI. However, this requires further analysis and theoretical inquiries.

Second, non-myopic algorithms are capable of beating greedy algorithms. Interestingly, the feasible rolling horizon $h^*$ is usually not large (Figure \ref{fig:rolling}). This result is encouraging as it implies that the computational burden does not need to increase significantly since a short horizon is most beneficial. Therefore, it is over-pessimistic to discard non-myopia if one is afraid of error accumulation and computational complexity. 

Lastly, the results indicate that the benefits of our method become increasingly significant for the high dimensional scenarios. This is intuitively understandable, due to ability of the non-greedy algorithm to efficiently explore the horizon. 

\begin{figure}[h]
\vspace{-0.1in}
\includegraphics[width=80mm, height=60mm, scale=0.8]{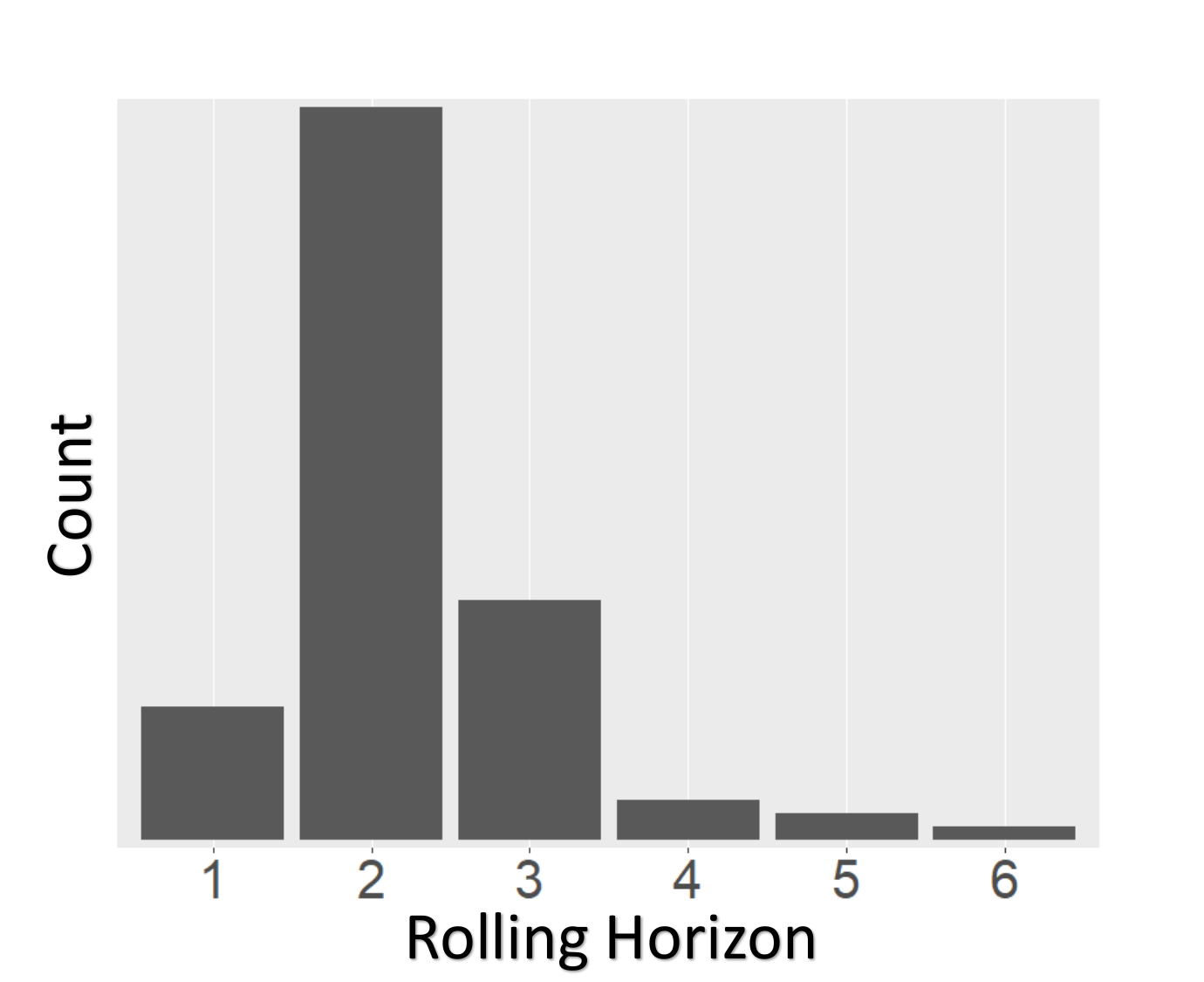}
\vspace{.1in}
\caption{Distribution of Rolling Horizon}
\label{fig:rolling}
\vspace{-0.1in}
\end{figure}




\subsubsection{Discount Factor}
We study the effect of different discount factors. Specifically, we choose $\alpha$ from set $\{0.6, 0.7, 0.8, 0.9\}$. The discount factor plays a role in ceiling the value of the rolling horizon as shown in \textbf{Theorem \ref{thm:h}}. An extreme case is when $\alpha=0$, the reward is collected immediately (i.e., greedily). Based on Table \ref{result} and \ref{result_alpha}, it seems that when $\alpha\in\{0.8, 0.9\}$, the performance is promising. This result is intuitive as a moderate discount factor encourages an algorithm to consider collecting future reward and is capable of generating improving results. 

\vspace{-0.65em}

\begin{table}[!htb]
  \caption{Mean Gap $G$ with respect to different fixed rolling horizon over 30 experiments with different initial points.}
  \label{result_h}
  \centering
  \scriptsize
  \begin{tabular}{ccccc}
    \toprule
    Function Name & $h=2$ & $h=3$ & $h=4$ & $h=5$ \\
    \midrule
    Branin-Hoo & 0.830 & 0.805 & 0.777 & 0.700 \\
    Six-Hump Camel & 0.855 & 0.860 & 0.671 & 0.665 \\
    Goldstein-Price & 0.829 & 0.824 & 0.732 & 0.667 \\
    Bohachevsky & 0.865 & 0.788 & 0.721 & 0.648 \\
    Griewant-3 & 0.802 & 0.755 & 0.621 & 0.683 \\ 
    \bottomrule
  \end{tabular}
\end{table}

\begin{table}[!htb]
  \caption{Mean Gap $G$ with respect to different discount factor over 30 experiments with different initial points.}
  \label{result_alpha}
  \centering
  \scriptsize
  \begin{tabular}{ccccc}
    \toprule
    Function Name & $\alpha=0.6$ & $\alpha=0.7$ & $\alpha=0.8$ & $\alpha=0.9$ \\
    \midrule
    Branin-Hoo & 0.812 & 0.801 & 0.867 & 0.864 \\
    Six-Hump Camel & 0.780 & 0.810 & 0.871 & 0.870 \\
    Goldstein-Price & 0.826 & 0.804 & 0.844 & 0.867 \\
    Bohachevsky & 0.803 & 0.818 & 0.853 & 0.872 \\
    Griewant-3 & 0.764 & 0.830 & 0.845 & 0.861 \\ 
    \bottomrule
  \end{tabular}
\end{table}


\section{Literature Review}
\label{sec:literature}
\subsection{Nonmyopia}
Few literature has focused on the non-myopic BO. \cite{ginsbourger2010towards} propose an expectation improvement (EI) criterion to derive sequential sampling strategies using Monte-Carlo simulation. Later, some approximation algorithms have been proposed that provide theoretical guarantees when sampling spaces are finite \citep{marchant2014sequential, ling2016gaussian}. Unfortunately such algorithms scale poorly with the number of rolling horizon considered. Later, \cite{gonzalez2016glasses} provided the GLASSES algorithm that relieves the myopia assumption of BO and can efficiently tackle an uncountable sampling space. GLASSES utilizes the long-sight loss function in \cite{osborne2010bayesian} and then propose an efficient optimization-marginalization scheme to solve that loss. Despite its strength, this approach assumes that the objective function is $L$-Lipschitz continuous. Besides the aforementioned methods, there exists some efficient multi-step look-ahead algorithms in the area of Bayesian feasibility determination and root-finding problems \citep{waeber2013bisection, cashore2016multi}. Nevertheless, they are only applicable to a very specific physical setting and cannot be easily generalized to a general framework. More Recently, \cite{lam2016bayesian, lam2017lookahead} proposed a look-ahead DP formulation using EI as a heuristic reward function. A direct extension to this work includes using the modified-EI (M-EI) \citep{groot2010bayesian, lam2015multifidelity} instead of EI to handle multi-information sources. However, a crucial drawback of the M-EI is that its selects sampling point and query sources separately. This might lead to reduced accuracy as joint optimality is not considered. Recently, \cite{jian2019twosteps} has proposed a practical two-step lookahead BO algorithm. This is one successful example that illustrates the benefits of looking sightly ahead.

\subsection{Multi-information Source}
We provide a short review on misoBO for completeness. Multi-information source optimization was thoroughly studied by \cite{swersky2013multi}. The authors argue that auxiliary tasks can aid in solving some expensive optimization problems. \cite{swersky2013multi} utilize a multivariate Gaussian process $\mathcal{GP}$ \citep{seeger2005semiparametric, bonilla2008multi} to model uncertainties in the objective function and predictive entropy search to decide on the next sampling location. Very recently, \cite{poloczek2017multi} improved the misoBO algorithm through utilizing a more flexible $\mathcal{GP}$ construction, using the linear model of coregionalization, and extending the KG algorithm to the setting with multiple information sources. They showed that the improved method (denoted as misoKG) can find sampling locations with higher value at reduced cost. Despite this seminal work, the misoKG does not consider far horizon planning since it uses a one-step look-ahead approach that only considers reducing regret at the next step. Besides misoBO, other closely related work belong to the problem of multi-fidelity optimization \citep{mcleod2017practical, kandasamy2016gaussian, cutajar2019deep}. These models have been mainly based on hierarchical model structures that restrict the information to be shared from low fidelity models. Also, they implement a myopic approach and fail to account for the future information such as remaining budget. 

\section*{Conclusion}
We provide a theoretical proof of the ``improving" nature of the rollout DP algorithm and a practical guideline on choosing a sequence of rolling horizons. We argue that rollout with a well chosen rolling horizon is beneficial in the sense that the error propagation is not catastrophic and the profits from the rollout improving nature remain. Therefore, the rollout DP has great promise in BO theory and applications. One possible future work is to generalize our analysis and apply it to other non-myopic methods. We hope our work will help inspire continued exploration into the non-myopic algorithms.

\newpage 
\section*{Appendix}

\section{Formulation}
In misoBO scenario, we have access to several sampling sources and we are interested in deciding both optimal sampling points and sampling sources.

\subsection{Setting}
We want to solve the unconstrained optimization problem $\bm{x}^*=\argmax_{\bm{x}\in\bm{\mathcal{X}}} f(\bm{x})$. Due to limited budget, sampling from the original source is expensive and incurs a cost $c(\bm{x}):\bm{\mathcal{X}}\to\mathbb{R^+} $. Now suppose we have access to $I$ possibly biased auxiliary sources indexed by $\mathcal{I}=\{1,\ldots,I\}$. Each source has a query cost $c_i(\bm{x}),i\in\mathcal{I}$. When sampling from source $i\in\mathcal{I}$ at point $\bm{x}$, we observe a noisy and biased outcome $y(i,\bm{x})$. We assume the observation $y(i,\bm{x})$ is normally distributed with mean $f(i,\bm{x})$ and variance $\sigma^2_i(\bm{x})$. Denote by $\delta_i(\bm{x}):\bm{\mathcal{X}}\to\mathbb{R}$ the bias term and $\delta_i(\bm{x})=f(i,\bm{x})-f(\bm{x})$ from each auxiliary source $i \in \mathcal{I}$. We set $\delta_i\sim\mathcal{GP}(0,\Sigma_i(\bm{x},\bm{x}'))$ and $f(\bm{x})\sim\mathcal{GP}(\mu_0(\bm{x}),\Sigma_0(\bm{x},\bm{x}'))$. Therefore, $f(i,\bm{x})$ is a GP with mean function $\mu(i,\bm{x})$ and covariance function $\Sigma((i,\bm{x}),(i',\bm{x}'))$. Specifically,
$\mu(i,\bm{x})=\mu_0(\bm{x}),\Sigma((i,\bm{x}),(i',\bm{x}'))=\Sigma_0(\bm{x},\bm{x}')+\mathbb{I}(i,i')\Sigma_i(\bm{x},\bm{x}')$, where $\mathbb{I}(i,i')=1$ if $i=i'$. Here we note that a mean function (or a constant) can be added to model systematic discrepancy in the bias $\delta_i$ \citep{higdon2008computer}.

Given data $\bm{D}_k=\{\bm{x}_1,y_1,i_1,\ldots,\bm{x}_k,y_k,i_k\}$, we would like to determine the next sampling duplet $(i_{k+1},\bm{x}_{k+1})$ by solving the following optimization problem: $(i_{k+1},\bm{x}_{k+1})\coloneqq(i^*,\bm{x}^*)=\argmax_{(i,\bm{x})\in(\mathcal{I},\bm{\mathcal{X}})} Q_k(i,\bm{x};\bm{D}_k)$. After observing the optimal sampling duplet, we augment the current training data $\bm{D}_k$ with the new observation and obtain $\bm{D}_{k+1}=\bm{D}_k\cup\{(\bm{x}_{k+1},y_{k+1},i_{k+1})\}$.

\subsection{Dynamic Programming}
Denote by $k\in\{1,..., N\}$. At each stage $k$, define the state space as $\mathcal{S}_k=(\bm{\mathcal{X}}\times\mathbb{R}\times\mathcal{I})$ and denote by dataset $\bm{D}_k\coloneqq s_k\in\mathcal{S}_k$ the current state, where $s_k$ is the potential state in the state space $\mathcal{S}_k$. A policy $\bm{\pi}=\{\pi_1,\ldots,\pi_N\}$ is a sequence of rules $\pi_k$ mapping the state space $\mathcal{S}_k$ to the design space $\bm{\mathcal{X}}$ and sources $\mathcal{I}$. We use $\pi^{\bm{\pi}}_k$ to emphasize the $k^{th}$ rule under policy $\bm{\pi}$. Let $\pi_k(\bm{D}_k)=(\bm{x}_{k+1},i_{k+1})$. Now denote by $r_k:\mathcal{S}_k\times\bm{\mathcal{X}}\times\mathcal{I}\to\mathbb{R}$ the reward function at stage $k$. Define the end-stage reward as $r_{N+1}:\mathcal{S}_{N+1}\to\mathbb{R}$. The discounted expected cumulative reward of a finite $N$-step horizon under policy $\bm{\pi}$ given initial dataset $\bm{D}_1$ can be expressed as $R^{\bm{\pi}}(\bm{D}_1)=$
\begin{equation}
\label{3:1}
    \mathbb{E}\bigg[\sum_{k=1}^{N} \alpha^{k-1} r_k(\bm{D}_k,\bm{x}_{k+1},i_{k+1})+\alpha^N r_{N+1}(\bm{D}_{N+1})\bigg].
\end{equation}
In the policy space $\bm{\Pi}$, we are interested in the optimal policy $\bm{\pi}^*\in\bm{\Pi}$ which maximizes Eq. \eqref{3:1}. Specifically,
\begin{equation}
    R^{\bm{\pi}^*}(\bm{D}_1)\coloneqq\max_{\bm{\pi}\in\bm{\Pi}}R^{\bm{\pi}}(\bm{D}_1).
\end{equation}
Based on the Bellman optimality equation, we can formulate \eqref{3:1} as a recursive DP: $R_k(\bm{D}_k)=$
\begin{equation}
\begin{split}
&\max_{(i_{k+1},\bm{x}_{k+1})\in(\mathcal{I},\bm{\mathcal{X}})}\mathbb{E}[r_k(\bm{D}_k,\bm{x}_{k+1},i_{k+1})+\alpha R_{k+1}(\bm{D}_{k+1})],
\end{split}
\end{equation}
with $R_{N+1}(\bm{D}_{N+1})=r_{N+1}(\bm{D}_{N+1})$. Therefore, the acquisition function is expressed as $Q_k(i_{k+1},\bm{x}_{k+1};\bm{D}_k)=$
\begin{equation}
\label{3:Q}
    \mathbb{E}\big[r_k(\bm{D}_k,\bm{x}_{k+1},i_{k+1})+\alpha R_{k+1}(\bm{D}_{k+1}) \big].
\end{equation}

\subsection{Knowledge Gradient}
The reward function at each stage $k$ quantifies the gains of applying rule $\pi_k$ given state $\bm{D}_k$. To handle multi-information source BO efficiently, we will adopt a normalized KG as our expected stage-reward function \citep{ryzhov2012knowledge, poloczek2017multi}. Specifically, $\mathbb{E}[r_k(\bm{D}_k,\bm{x}_{k+1},i_{k+1})]=$
\begin{equation}
\label{3:1:1}
    \mathbb{E}\big[\frac{1}{c_{i_{k+1}}(\bm{x}_{k+1})}\big(\max_{\bm{x}'}\mu^{k+1}(0,\bm{x}')-\max_{\bm{x}'}\mu^k(0,\bm{x}')\big)\big].
\end{equation}
The first part in the expected KG can be expressed as $\mathbb{E}\big[\max_{\bm{x}'}\mu^{k+1}(0,\bm{x}')\big]=$
\begin{equation}
\label{3:1:2}
    \mathbb{E}\big[\max_{\bm{x}'}\{\mu^k(0,\bm{x}')+\sigma_{\bm{x}'}^k(i,\bm{x}_{k+1})Z\}\big],
\end{equation}
where $Z$ is a standard normal random variable and $\sigma_{\bm{x}'}^k(i_{k+1},\bm{x}_{k+1})=$
\begin{equation*}
    \frac{\Sigma^k((0,\bm{x}'),(i_{k+1},\bm{x}_{k+1}))}{[\sigma^2_{i_{k+1}}(\bm{x}_{k+1})+\Sigma^k((i_{k+1},\bm{x}_{k+1}),(i_{k+1},\bm{x}_{k+1}))]^{1/2}},
\end{equation*}
such that $\Sigma^k$ is the posterior covariance function of $f$ given current data $\bm{D}_k$. Since we are taking expectation with respect to Gaussian random variables, equations \eqref{3:1:1} and \eqref{3:1:2} are easy to compute and can be efficiently estimated by a Gauss-Hermite quadrature with $n$ nodes. Under the single information source scenario, we simply let $I=1$. We summarize  our misoKG algorithm in \textbf{Algorithm 1}. 




\section{Algorithm}
The algorithm for the multi-information source BO is lised in $\textbf{Algorithm 1}$.
\begin{algorithm}[!htb]
    \label{algo:1}
	\SetAlgoLined
	\KwData{Initial data $\bm{D}_1$, budget $B$ and query cost $c,c_i$, number of remaining evaluations $N$.}
	\KwResult{Data $\bm{D}_N$, optimal value $f^{\bm{D}_{N}}_{max}$, Gap $G$.}
	Fit $\mathcal{GP}$ to data $\bm{D}_1$ and obtain parameters of bias terms and initial optimal value $f^{\bm{D}_1}_{max}$\;
	\For{$k=1:N$}{
		\eIf{ $B-\min_i c_i < 0$ }{
			Directly return $\bm{D}_k$ as $\bm{D}_N$\;
			STOP\;
		}{
			Choose feasible horizon $h$\;
			Select $(i_{k+1},\bm{x}_{k+1})=\argmax_{i\in\mathcal{I},\bm{x}\in\bm{\mathcal{X}}} \tilde{Q}_k(i,\bm{x};\bm{D}_k)$ s.t. $c_{i_{k+1}}(\bm{x}_{k+1})\leq B$\;
			$B$ $\gets$ $B-c_{i_{k+1}}(\bm{x}_{k+1})$\;
		}
		Evaluate $f(i_{k+1},\cdot)$ at $\bm{x}_{k+1}$ and obtain $y_{k+1}$\;
		Augment the dataset $\bm{D}_{k+1}=\bm{D}_k\cup\{(\bm{x}_{k+1},y_{k+1},i_{k+1})\}$\;
		Fit $\mathcal{GP}$ to data $\bm{D}_{k+1}$\;
		$k\gets k+1$\;
	}
	Fit $\mathcal{GP}$ to data $\bm{D}_{N}$\;
	Obtain optimal value $f^{\bm{D}_{N}}_{max}$\;
	Calculate the Gap $G$\;
	Return $\bm{D}_N$, $f^{\bm{D}_{N}}_{max}$ and $G$.
	\caption{The Non-myopic Multi-Information Source Bayesian Optimization Algorithm}
\end{algorithm}

\section{Performance Guarantees}
Under the multi-information source setting, the heuristic KG is also sequentially consistent and sequentially improving. 
\begin{coro}
    The KG algorithm is sequentially consistent and sequentially improving.
\end{coro}
\begin{proof}
Remember that state $s_k$ is the dataset $\bm{D}_k$. Assume KG algorithm starts at a state $s_k$ (i.e., current dataset $\bm{D}_k$). At each iteration of KG, given a path $(\bm{D}_k, \bm{D}_{k+1}, \ldots, \bm{D}_m)$ and $\bm{D}_m$ is not the state at the end, the next state $\bm{D}_{m+1}$ is obtained by solving the acquisition function of KG and augmenting $\bm{D}_m$ with $(\bm{x}^*, y, i^*)$. If $\bm{D}_{m+1}$ is not the terminating state, the algorithm will start with the path $(\bm{D}_k, \bm{D}_{k+1}, \ldots, \bm{D}_m,\bm{D}_{m+1}$). Otherwise, the algorithm will terminate with state $\bm{D}_{m+1}$ and $N=m+1$. Therefore, KG is sequentially consistent.  

Let $(\bm{D}_1, \bm{D}_2, \ldots, \bm{D}_k,\ldots, \bm{D}_N)$ be the path generated by rollout starting from $\bm{D}_1$. Define $\sigma(s)$ as the sub $\sigma$-algebra generated by state $s$.  Since KG is sequentially consistent, we have
\begin{equation}
    \label{4:1:1}
    \begin{split}
        &\mathbb{E}\big[\sum_{\ell=k}^N r_\ell(s_\ell,\pi^{\bm{\pi}_{\mathcal{H}(s)}}_\ell(s_\ell))|\sigma(s')\big]\\
        &=\mathbb{E}\big[\sum_{\ell=k}^N r_\ell(s_\ell,\pi^{\bm{\pi}_{\mathcal{H}(s')}}_\ell(s_\ell))|\sigma(s')\big],
    \end{split}
\end{equation}
where $s'$ is the subsequent state of $s$. Therefore, the rollout is sequentially improving and we complete our proof.
\end{proof}

\newpage

\subsubsection*{Reference}
\renewcommand{\bibname}{\vspace{-0.2in}} 
\bibliography{mybib}

\end{document}


\twocolumn[

\aistatstitle{Appendix: Multi-information Source BO}

\aistatsauthor{Xubo Yue \And Raed Al Kontar}

\aistatsaddress{University of Michigan \And  University of Michigan} ]

\section{Formulation}
In misoBO scenario, we have access to several sampling sources and we are interested in deciding both optimal sampling points and sampling sources.

\subsection{Setting}
We want to solve the unconstrained optimization problem $\bm{x}^*=\argmax_{\bm{x}\in\bm{\mathcal{X}}} f(\bm{x})$. Due to limited budget, sampling from the original source is expensive and incurs a cost $c(\bm{x}):\bm{\mathcal{X}}\to\mathbb{R^+} $. Now suppose we have access to $I$ possibly biased auxiliary sources indexed by $\mathcal{I}=\{1,\ldots,I\}$. Each source has a query cost $c_i(\bm{x}),i\in\mathcal{I}$. When sampling from source $i\in\mathcal{I}$ at point $\bm{x}$, we observe a noisy and biased outcome $y(i,\bm{x})$. We assume the observation $y(i,\bm{x})$ is normally distributed with mean $f(i,\bm{x})$ and variance $\sigma^2_i(\bm{x})$. Denote by $\delta_i(\bm{x}):\bm{\mathcal{X}}\to\mathbb{R}$ the bias term and $\delta_i(\bm{x})=f(i,\bm{x})-f(\bm{x})$ from each auxiliary source $i \in \mathcal{I}$. We set $\delta_i\sim\mathcal{GP}(0,\Sigma_i(\bm{x},\bm{x}'))$ and $f(\bm{x})\sim\mathcal{GP}(\mu_0(\bm{x}),\Sigma_0(\bm{x},\bm{x}'))$. Therefore, $f(i,\bm{x})$ is a GP with mean function $\mu(i,\bm{x})$ and covariance function $\Sigma((i,\bm{x}),(i',\bm{x}'))$. Specifically,
$\mu(i,\bm{x})=\mu_0(\bm{x}),\Sigma((i,\bm{x}),(i',\bm{x}'))=\Sigma_0(\bm{x},\bm{x}')+\mathbb{I}(i,i')\Sigma_i(\bm{x},\bm{x}')$, where $\mathbb{I}(i,i')=1$ if $i=i'$. Here we note that a mean function (or a constant) can be added to model systematic discrepancy in the bias $\delta_i$ \citep{higdon2008computer}.

Given data $\bm{D}_k=\{\bm{x}_1,y_1,i_1,\ldots,\bm{x}_k,y_k,i_k\}$, we would like to determine the next sampling duplet $(i_{k+1},\bm{x}_{k+1})$ by solving the following optimization problem: $(i_{k+1},\bm{x}_{k+1})\coloneqq(i^*,\bm{x}^*)=\argmax_{(i,\bm{x})\in(\mathcal{I},\bm{\mathcal{X}})} Q_k(i,\bm{x};\bm{D}_k)$. After observing the optimal sampling duplet, we augment the current training data $\bm{D}_k$ with the new observation and obtain $\bm{D}_{k+1}=\bm{D}_k\cup\{(\bm{x}_{k+1},y_{k+1},i_{k+1})\}$.

\subsection{Dynamic Programming}
Denote by $k\in\{1,..., N\}$. At each stage $k$, define the state space as $\mathcal{S}_k=(\bm{\mathcal{X}}\times\mathbb{R}\times\mathcal{I})$ and denote by dataset $\bm{D}_k\coloneqq s_k\in\mathcal{S}_k$ the current state, where $s_k$ is the potential state in the state space $\mathcal{S}_k$. A policy $\bm{\pi}=\{\pi_1,\ldots,\pi_N\}$ is a sequence of rules $\pi_k$ mapping the state space $\mathcal{S}_k$ to the design space $\bm{\mathcal{X}}$ and sources $\mathcal{I}$. We use $\pi^{\bm{\pi}}_k$ to emphasize the $k^{th}$ rule under policy $\bm{\pi}$. Let $\pi_k(\bm{D}_k)=(\bm{x}_{k+1},i_{k+1})$. Now denote by $r_k:\mathcal{S}_k\times\bm{\mathcal{X}}\times\mathcal{I}\to\mathbb{R}$ the reward function at stage $k$. Define the end-stage reward as $r_{N+1}:\mathcal{S}_{N+1}\to\mathbb{R}$. The discounted expected cumulative reward of a finite $N$-step horizon under policy $\bm{\pi}$ given initial dataset $\bm{D}_1$ can be expressed as $R^{\bm{\pi}}(\bm{D}_1)=$
\begin{equation}
\label{3:1}
    \mathbb{E}\bigg[\sum_{k=1}^{N} \alpha^{k-1} r_k(\bm{D}_k,\bm{x}_{k+1},i_{k+1})+\alpha^N r_{N+1}(\bm{D}_{N+1})\bigg].
\end{equation}
In the policy space $\bm{\Pi}$, we are interested in the optimal policy $\bm{\pi}^*\in\bm{\Pi}$ which maximizes Eq. \eqref{3:1}. Specifically,
\begin{equation}
    R^{\bm{\pi}^*}(\bm{D}_1)\coloneqq\max_{\bm{\pi}\in\bm{\Pi}}R^{\bm{\pi}}(\bm{D}_1).
\end{equation}
Based on the Bellman optimality equation, we can formulate \eqref{3:1} as a recursive DP: $R_k(\bm{D}_k)=$
\begin{equation}
\begin{split}
&\max_{(i_{k+1},\bm{x}_{k+1})\in(\mathcal{I},\bm{\mathcal{X}})}\mathbb{E}[r_k(\bm{D}_k,\bm{x}_{k+1},i_{k+1})+\alpha R_{k+1}(\bm{D}_{k+1})],
\end{split}
\end{equation}
with $R_{N+1}(\bm{D}_{N+1})=r_{N+1}(\bm{D}_{N+1})$. Therefore, the acquisition function is expressed as $Q_k(i_{k+1},\bm{x}_{k+1};\bm{D}_k)=$
\begin{equation}
\label{3:Q}
    \mathbb{E}\big[r_k(\bm{D}_k,\bm{x}_{k+1},i_{k+1})+\alpha R_{k+1}(\bm{D}_{k+1}) \big].
\end{equation}

\subsection{Knowledge Gradient}
The reward function at each stage $k$ quantifies the gains of applying rule $\pi_k$ given state $\bm{D}_k$. To handle multi-information source BO efficiently, we will adopt a normalized KG as our expected stage-reward function \citep{ryzhov2012knowledge, poloczek2017multi}. Specifically, $\mathbb{E}[r_k(\bm{D}_k,\bm{x}_{k+1},i_{k+1})]=$
\begin{equation}
\label{3:1:1}
    \mathbb{E}\big[\frac{1}{c_{i_{k+1}}(\bm{x}_{k+1})}\big(\max_{\bm{x}'}\mu^{k+1}(0,\bm{x}')-\max_{\bm{x}'}\mu^k(0,\bm{x}')\big)\big].
\end{equation}
The first part in the expected KG can be expressed as $\mathbb{E}\big[\max_{\bm{x}'}\mu^{k+1}(0,\bm{x}')\big]=$
\begin{equation}
\label{3:1:2}
    \mathbb{E}\big[\max_{\bm{x}'}\{\mu^k(0,\bm{x}')+\sigma_{\bm{x}'}^k(i,\bm{x}_{k+1})Z\}\big],
\end{equation}
where $Z$ is a standard normal random variable and $\sigma_{\bm{x}'}^k(i_{k+1},\bm{x}_{k+1})=$
\begin{equation*}
    \frac{\Sigma^k((0,\bm{x}'),(i_{k+1},\bm{x}_{k+1}))}{[\sigma^2_{i_{k+1}}(\bm{x}_{k+1})+\Sigma^k((i_{k+1},\bm{x}_{k+1}),(i_{k+1},\bm{x}_{k+1}))]^{1/2}},
\end{equation*}
such that $\Sigma^k$ is the posterior covariance function of $f$ given current data $\bm{D}_k$. Since we are taking expectation with respect to Gaussian random variables, equations \eqref{3:1:1} and \eqref{3:1:2} are easy to compute and can be efficiently estimated by a Gauss-Hermite quadrature with $n$ nodes. Under the single information source scenario, we simply let $I=1$. We summarize  our misoKG algorithm in \textbf{Algorithm 1}. 




\section{Algorithm}
The algorithm for the multi-information source BO is lised in $\textbf{Algorithm 1}$.
\begin{algorithm}[!htb]
    \label{algo:1}
	\SetAlgoLined
	\KwData{Initial data $\bm{D}_1$, budget $B$ and query cost $c,c_i$, number of remaining evaluations $N$.}
	\KwResult{Data $\bm{D}_N$, optimal value $f^{\bm{D}_{N}}_{max}$, Gap $G$.}
	Fit $\mathcal{GP}$ to data $\bm{D}_1$ and obtain parameters of bias terms and initial optimal value $f^{\bm{D}_1}_{max}$\;
	\For{$k=1:N$}{
		\eIf{ $B-\min_i c_i < 0$ }{
			Directly return $\bm{D}_k$ as $\bm{D}_N$\;
			STOP\;
		}{
			Choose feasible horizon $h$\;
			Select $(i_{k+1},\bm{x}_{k+1})=\argmax_{i\in\mathcal{I},\bm{x}\in\bm{\mathcal{X}}} \tilde{Q}_k(i,\bm{x};\bm{D}_k)$ s.t. $c_{i_{k+1}}(\bm{x}_{k+1})\leq B$\;
			$B$ $\gets$ $B-c_{i_{k+1}}(\bm{x}_{k+1})$\;
		}
		Evaluate $f(i_{k+1},\cdot)$ at $\bm{x}_{k+1}$ and obtain $y_{k+1}$\;
		Augment the dataset $\bm{D}_{k+1}=\bm{D}_k\cup\{(\bm{x}_{k+1},y_{k+1},i_{k+1})\}$\;
		Fit $\mathcal{GP}$ to data $\bm{D}_{k+1}$\;
		$k\gets k+1$\;
	}
	Fit $\mathcal{GP}$ to data $\bm{D}_{N}$\;
	Obtain optimal value $f^{\bm{D}_{N}}_{max}$\;
	Calculate the Gap $G$\;
	Return $\bm{D}_N$, $f^{\bm{D}_{N}}_{max}$ and $G$.
	\caption{The Non-myopic Multi-Information Source Bayesian Optimization Algorithm}
\end{algorithm}

\section{Performance Guarantees}
Under the multi-information source setting, the heuristic KG is also sequentially consistent and sequentially improving. 
\begin{coro}
    The KG algorithm is sequentially consistent and sequentially improving.
\end{coro}
\begin{proof}
Remember that state $s_k$ is the dataset $\bm{D}_k$. Assume KG algorithm starts at a state $s_k$ (i.e., current dataset $\bm{D}_k$). At each iteration of KG, given a path $(\bm{D}_k, \bm{D}_{k+1}, \ldots, \bm{D}_m)$ and $\bm{D}_m$ is not the state at the end, the next state $\bm{D}_{m+1}$ is obtained by solving the acquisition function of KG and augmenting $\bm{D}_m$ with $(\bm{x}^*, y, i^*)$. If $\bm{D}_{m+1}$ is not the terminating state, the algorithm will start with the path $(\bm{D}_k, \bm{D}_{k+1}, \ldots, \bm{D}_m,\bm{D}_{m+1}$). Otherwise, the algorithm will terminate with state $\bm{D}_{m+1}$ and $N=m+1$. Therefore, KG is sequentially consistent.  

Let $(\bm{D}_1, \bm{D}_2, \ldots, \bm{D}_k,\ldots, \bm{D}_N)$ be the path generated by rollout starting from $\bm{D}_1$. Define $\sigma(s)$ as the sub $\sigma$-algebra generated by state $s$.  Since KG is sequentially consistent, we have
\begin{equation}
    \label{4:1:1}
    \begin{split}
        &\mathbb{E}\big[\sum_{\ell=k}^N r_\ell(s_\ell,\pi^{\bm{\pi}_{\mathcal{H}(s)}}_\ell(s_\ell))|\sigma(s')\big]\\
        &=\mathbb{E}\big[\sum_{\ell=k}^N r_\ell(s_\ell,\pi^{\bm{\pi}_{\mathcal{H}(s')}}_\ell(s_\ell))|\sigma(s')\big],
    \end{split}
\end{equation}
where $s'$ is the subsequent state of $s$. Therefore, the rollout is sequentially improving and we complete our proof.
\end{proof}

\medskip
\small
\bibliography{mybib}